\documentclass[runningheads]{llncs}
\usepackage{amssymb,amsmath}
\usepackage{graphicx}
\usepackage[misc]{ifsym}
\usepackage{booktabs,siunitx,etoolbox,threeparttable}
\usepackage{subcaption}
\usepackage{microtype}
\usepackage[colorlinks=true,citecolor=blue,urlcolor=black]{hyperref}
\usepackage[capitalise]{cleveref}

\usepackage{amsmath,amsfonts,amssymb}
\usepackage{mathtools}
\usepackage{xparse}

\DeclarePairedDelimiter{\tracefences}{[}{]}
\newcommand{\trace}{\operatorname{tr}\tracefences}
% Matrix/vector transpose
\newcommand{\inv}{^{-1}}% Matrix inverse
% Matrix transpose of inverse
\renewcommand{\det}[1]{\left|#1\right|}% Determinant
\newcommand{\cond}{\mathbin{|}}
\newcommand{\midcond}{\,\middle|\,}
\DeclarePairedDelimiter{\norm}{\lVert}{\rVert}
\DeclarePairedDelimiter{\innerprod}{\langle}{\rangle}
\DeclarePairedDelimiter{\abs}{\lvert}{\rvert}
\newcommand{\diff}{\mathop{}\!\mathrm{d}}
\newcommand{\Diff}{\mathop{}\!\mathrm{D}}

\NewDocumentCommand\prob{somg}{% Probability
	{p%\mathbb{P}%\mathrm{r}%
	\IfValueT{#2}%
		{_{#2}}%
	\IfBooleanTF{#1}% Star makes big brackets
		{%
			\!\left(#3%
			\IfValueT{#4}%
				{\midcond #4}%	
			\right)
		}{%
			(#3%
			\IfValueT{#4}%
				{\cond #4}%
			)%
		}%
	}%
}

\NewDocumentCommand\expec{somg}{% Expectation
	{\mathbb{E}%
	\IfValueT{#2}%
		{_{#2}}%
	\IfBooleanTF{#1}% Star makes big brackets
		{%
			\!\left[#3%
			\IfValueT{#4}%
				{\midcond #4}%	
			\right]
		}{%
			[#3%
			\IfValueT{#4}%
				{\cond #4}%
			]%
		}%
	}%
}

% TODO Define proper delimited operators for expectation, probability etc.
%\newcommand{\expec}[1][]{\mathbb{E}_{#1}}%\expecfences}% Expectation

\NewDocumentCommand\variance{somg}{% Variance
	{\mathbb{V}%
	\IfValueT{#2}%
		{_{#2}}%
	\IfBooleanTF{#1}% Star makes big brackets
		{%
			\!\left[#3%
			\IfValueT{#4}%
				{\midcond #4}%	
			\right]
		}{%
			[#3%
			\IfValueT{#4}%
				{\cond #4}%
			]%
		}%
	}%
}

% Variational distribution

\NewDocumentCommand\kldiv{smm}{% KL Divergence
	{\mathit{KL}%
	\IfBooleanTF{#1}% Star makes big brackets
		{%
			\left[#2%
			\IfValueT{#3}%
				{\middle\Vert #3}%	
			\right]
		}{%
			[#2%
			\IfValueT{#3}%
				{\Vert #3}%
			]%
		}%
	}%
}

\def\*#1{\mathbf{#1}}

%\tikzset{const/.append style={circle,minimum size=10pt}}
%\newcommand{\agate}[5][]{ % Gate with angled links
%	\node[gate=#3, name=#2, #1, alias=#2-alias] {}; %
%	\foreach \x in {#4} { %
%		\draw [-*,thick] (\x) #5 (#2-alias); %
%	};%
%}

\def\bxi{\boldsymbol{\xi}}
\def\bpsi{\boldsymbol{\psi}}

%\newcommand{\pdf}[3]{#1\!\left( #2 \,\middle|\, #3 \right)}
%\NewDocumentCommand\pdf{mmg}{% PDF
%	{#1%
%	\!\left(#2%
%	\IfValueT{#3}%
%		{\midcond #3}%	
%	\right)}%
%}

\NewDocumentCommand\BracketsNoStar{mg}{
	(#1%
	\IfValueT{#2}%
		{\cond #2}%
	)%
}
\NewDocumentCommand\BracketsStar{mg}{
	\mathopen{}\left(#1%
	\IfValueT{#2}%
		{\midcond #2}%	
	\right)\mathclose{}%
}
\NewDocumentCommand\pdf{smmg}{% PDF
	{#2%
	\IfBooleanTF{#1}{%
		\BracketsStar{#3}{#4}
	}{%
		\BracketsNoStar{#3}{#4}
	}}%
}

\newcommand{\npdf}[2]{\pdf{\mathcal{N}}{#1}{#2}}

\newcommand\numberthis{\stepcounter{equation}\tag{\theequation}}
\newcommand{\MethodName}{\textsc{NDFlow}}
\newcommand{\Nyul}{\textsc{Nyul}}

\title{Nonparametric Density Flows\\for MRI Intensity Normalisation}
% \titlerunning{Nonparametric Density Flows for MRI Intensity Normalisation}

\newcommand{\corrauth}{\textsuperscript{(\Letter)}}
\author{Daniel C. Castro\corrauth \and Ben Glocker}
\authorrunning{D.C. Castro and B. Glocker}
\institute{Biomedical Image Analysis Group\\
Imperial College London, UK\\
\email{\{dc315,b.glocker\}@imperial.ac.uk}}

\begin{document}

\mainmatter % start of an individual contribution
\maketitle

% !TeX root=../paper.tex

\begin{abstract}

With the adoption of powerful machine learning methods in medical image analysis, it is becoming increasingly desirable to aggregate data that is acquired across multiple sites. However, the underlying assumption of many analysis techniques that corresponding tissues have consistent intensities in all images is often violated in multi-centre databases. We introduce a novel intensity normalisation scheme based on density matching, wherein the histograms are modelled as Dirichlet process Gaussian mixtures. The source mixture model is transformed to minimise its $L^2$ divergence towards a target model, then the voxel intensities are transported through a mass-conserving flow to maintain agreement with the moving density. In a multi-centre study with brain MRI data, we show that the proposed technique produces excellent correspondence between the matched densities and histograms. We further demonstrate that our method makes tissue intensity statistics substantially more compatible between images than a baseline affine transformation and is comparable to state-of-the-art while providing considerably smoother transformations. Finally, we validate that nonlinear intensity normalisation is a step toward effective imaging data harmonisation.

\end{abstract}

% !TeX root=../paper.tex

\section{Introduction}

Many medical image analysis methods rely on the hypothesis that corresponding anatomical structures present similar intensity profiles. Unlike computed tomography, magnetic resonance imaging does not produce scans in an absolute standard scale, in general. Even when using the same imaging protocols, there can be significant variation between different scanners. Acquisition parameters have a complex effect on the luminance of the acquired images, therefore a simple linear rescaling of intensities is usually insufficient for effective data harmonisation \cite{Hellier2003}. Therefore, a crucial factor for enabling the construction of large-scale image databases from multiple sites is accurate nonlinear intensity normalisation.

A number of different approaches have been introduced for this task (cf.~\cite{Bergeest2008}), the most widely-adopted of which is that of Ny\'ul et al.\ \cite{Nyul2000}. The authors proposed to normalise intensities by matching a set of histogram quantiles, using these as landmarks for a piecewise linear transformation. Despite its apparent simplicity, it has proven very effective in clinical applications \cite{Shah2011}.

Our proposed method, nonparametric density flows (\MethodName), is perhaps conceptually closest to \cite{Hellier2003}, which involves matching Gaussian mixture models (GMMs) fitted to a pair of image histograms. The author used a finite mixture to represent a pre-defined set of five tissues classes, whereas we propose to use nonparametric mixtures, focussing on accurately modelling the density rather than discriminating tissue types, and sidestepping the problem of pre-selecting the number of components. A further difference to our work is that, instead of polynomially interpolating between the means of corresponding components, we build a smooth transformation model based on density flows.

% !TeX root=../paper.tex

\section{Method}
\label{sec:method}

We begin by justifying and describing the density model used to represent the intensity distributions to be matched. We then introduce the chosen objective function with its gradients for optimisation. Finally, we present our flow-based transformation model, which deforms the data so it conforms to the matched density model. Note that we focus here on single-modality intensity normalisation, although the entire formulation below extends naturally to the multivariate case.

\begin{figure}[tb]
	\centering
    \includegraphics[width=\textwidth]{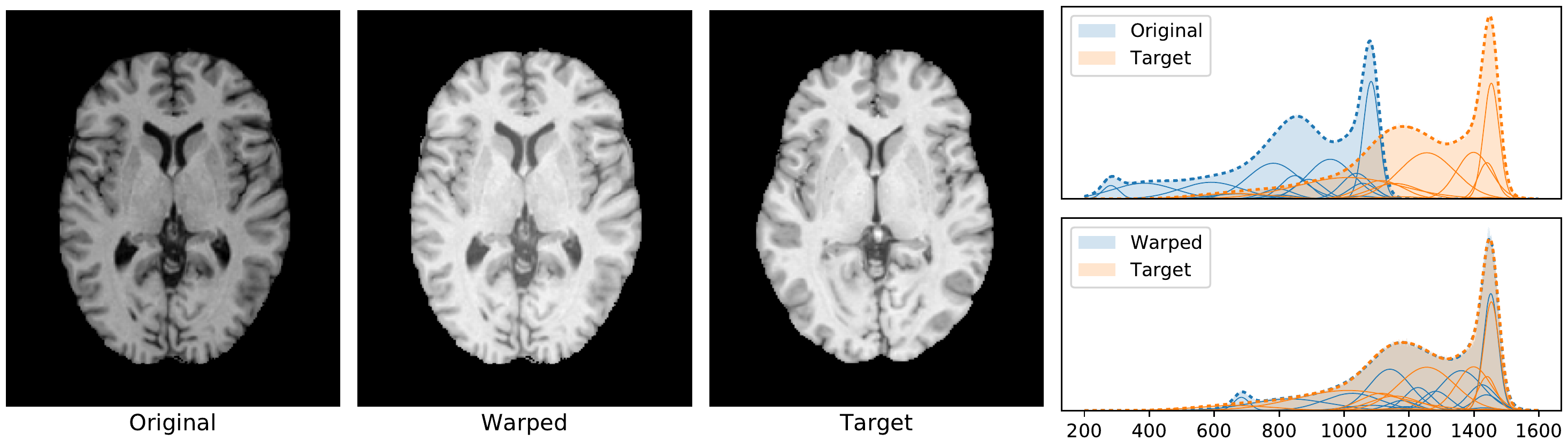}
    \caption{Comparison of two MRI scans, before and after the proposed \MethodName{} normalisation. \emph{Right:} histograms (shaded) and fitted mixture models (\emph{dotted:} likelihood, \emph{solid:} mixture components).}
    \label{fig:example_imgs_hist}
\end{figure}

\subsection{Intensity Model}\label{sec:method_model}
In order to be able to match the intensity distributions of a pair of images, a suitable probability density model is required. Typically, finite mixture models are considered for this task \cite{Hellier2003,Roy2007}. However, a well-known limitation of these is the requirement to specify a priori a fixed number of components, which may in addition call for an iterative model selection loop (e.g.~\cite{Roy2007}).

On the opposite end of the spectrum, another approach is to use kernel density estimation, which is widespread for shape registration (e.g.~\cite{Jian2011,Hasanbelliu2011}). However, this formulation would result in an unwieldy optimisation problem, involving thousands or millions of parameters and all pairwise interactions. Furthermore, the derived transformation would likely not be satisfactorily smooth without additional regularisation.

To overcome both issues we propose to use Dirichlet process Gaussian mixture models (DPGMMs) \cite{Ferguson1983}. Instead of specifying a fixed number of components, they rely on a vague concentration parameter, which regulates the expected amount of clustering fragmentation and enables them to adapt their complexity to the data at hand. By allowing an unbounded number of components and setting a versatile prior on the mixture proportions, they appear as a parsimonious middle ground for flexibility and tractability.

We fit the DPGMMs to each image's intensities using variational inference \cite{Blei2006}. More specifically, we implemented an efficient weighted variant to fit a mixture directly to each 1D histogram.

\subsection{Density Matching}
The first step is to perform a coarse affine alignment by matching the moving density's first and second moments to the target's, accounting for arbitrary translation and rescaling of the values. This same affine transformation is then also applied to the data before the nonlinear warping takes place.

We quantify the disagreement between two probability density functions $q$ and $p$ on a probability space $\mathcal{X}$ by means of the $L^2$ divergence:
\begin{equation}\label{eq:divergence}
	D_{L^2}[q, p] = \tfrac12 \norm{q - p}^2 = \tfrac12 \norm{q}^2 + \tfrac12 \norm{p}^2 - \innerprod{q,p} \,,
\end{equation}
where $\innerprod{q,p} = \int q(x)\,p(x) \diff x$ is the $L^2$ inner product and $\norm{q}=\sqrt{\innerprod{q,q}}$ is its induced norm. Aside from being symmetric, this quantity is positive and reaches zero iff $q \overset{\text{a.e.}}{=} p$. Crucially, unlike the usual Kullback--Leibler divergence, it is expressible in closed form for Gaussian mixture densities.
% \footnote{Although we did some preliminary investigation using the related Cauchy--Schwarz divergence \cite{Principe2000}, minimisation of the $L^2$ divergence seems to have better behaved gradients and to be much more stable for this application.}

Let $q = \sum_k \pi_k q_k$ and $p = \sum_m \tau_m p_m$ denote two Gaussian mixtures, with components $q_k(x) = \npdf{x}{\mu_k, \lambda_k\inv}$ and $p_m(x) = \npdf{x}{\nu_m, \omega_m\inv}$. \Cref{eq:divergence} has tractable gradients w.r.t.\ the parameters of $q$ (\cref{app:div_grads}), which we use to optimise its components' means $\{\mu_k\}_k$ and precisions $\{\lambda_k\}_k$.
%Gradients of \eqref{eq:divergence} w.r.t.\ parameters of $q$ take tractable analytic forms, and we use them to optimise its components' means $\{\mu_k\}_k$ and precisions $\{\lambda_k\}_k$ (cf.\ extended version).

We have found, in practice, that it is largely unnecessary to adapt the mixing proportions, $\{\pi_k\}_k$, to get an excellent agreement between mixture densities. In fact, changing the mixture weights would require transferring samples \emph{between} mixture components. Although surely possible, we point out that in the context of histogram matching this would imply altering their semantic value (e.g.\ consider a mixture of two well-separated components representing different tissue types).

\subsection{Warping}
After matching one GMM to another, we also need a way to transform the data modelled by that GMM so it matches the target data. To this end, we draw inspiration from fluid mechanics and define the warping transformation, $f$, as the trajectories of particles under the effect of a velocity field $u$ over time, taking the probability density $q$ for the mechanical mass density. The key property that such flow must satisfy is \emph{conservation of mass}: $\partial_t q + \partial_x (q u) = 0$, where $t \mapsto q^{(t)}$ is specified directly from the density matching.

Let us first consider the case of warping a single mixture component. A random variable $x \sim \mathcal{N}(\mu_k, \lambda_k\inv)$ can be expressed via a diffeomorphic reparametrisation of a standard Gaussian, with $x = \psi_k(\epsilon) = \mu_k + \epsilon / \sqrt{\lambda_k}$ and $\epsilon \sim \mathcal{N}(0, 1)$. Assuming its mean and precision are changing with rates $\dot\mu_k$ and $\dot\lambda_k$, respectively, we can introduce a velocity field $u_k = \dot\psi_k \circ \psi_k\inv$ for its samples so that they agree with this evolving density. The instantaneous velocity at `time' $t$ is thus given by
\begin{equation}\label{eq:component_velocity}
	u_k^{(t)}(x) = \dot\mu_k^{(t)} - \frac{\dot\lambda_k^{(t)}}{2 \lambda_k^{(t)}} \big(x - \mu_k^{(t)}\big) \,.
\end{equation}

In the case of a mixture with constant weights $\{\pi_k\}_k$, we can construct a smooth, \emph{mass-conserving} global velocity field $u$ as
\begin{equation}\label{eq:mixture_velocity}
	u^{(t)}(x) = \sum_k \frac{\pi_k q_k^{(t)}(x)}{q^{(t)}(x)} \, u_k^{(t)}(x) \,,
\end{equation}
which is simply a point-wise convex combination of each component's velocity field, $u_k$, weighted by the corresponding posterior assignment probabilities.

Finally, the warping transformation $f^{(t)}$ is given by the solution to the following ordinary differential equation (ODE):
\begin{equation}\label{eq:warping_ode}
	\partial_t f^{(t)}(x) = u^{(t)}(f^{(t)}(x)) \,, \quad f^{(0)}(x) = x \,.
\end{equation}
With $f$ defined as above, we can prove that $q^{(t)}$ is indeed the density of samples from $q^{(0)}$ transformed through $f^{(t)}$, i.e.\ ${q^{(0)} = |\partial_x f^{(t)}| \, q^{(t)} \circ f^{(t)}}$ (\cref{app:mass_cons_proof}). Crucially, the true solution to \cref{eq:warping_ode} is diffeomorphic by construction, and can be numerically approximated (and inverted) with arbitrary precision. In particular, we employ the classic fourth-order Runge--Kutta ODE solver (RK4).

Now assume we obtain optimal parameter values $\{\mu_k^*\}_k$ and $\{\lambda_k^*\}_k$ after matching $q$ to $p$. We can then warp the data using the above approach, for example linearly interpolating the intermediate parameter values, ${\mu_k^{(t)} = t \mu_k^* + (1-t) \mu_k^{(0)}}$ and ${\lambda_k^{(t)} = t \lambda_k^* + (1-t) \lambda_k^{(0)}}$, hence setting the rates in \cref{eq:component_velocity} to constant values, ${\dot\mu_k = \mu_k^* - \mu_k^{(0)}}$ and ${\dot\lambda_k = \lambda_k^* - \lambda_k^{(0)}}$, and integrating \cref{eq:warping_ode} for $t \in [0, 1]$.

\subsection{Practical Considerations}
Since each medical image in a dataset can have millions of voxels, computing the posteriors and flows for every voxel individually can be too expensive for batch processing. To mitigate this issue, we can compute the end-to-end transformation on a mesh in the range of interest, which is then interpolated for the intensities in the entire volume. In the reported experiments, we have used a uniformly-spaced mesh of 200 points, which has proven accurate enough for normalisation purposes.

Note that the transformation could also be computed on the histogram of discrete intensity values and built into a look-up table. However, this would not scale well to two or more dimensions for multi-modal intensity normalisation, whereas a mesh would not need to be very fine nor require a regular grid layout.

% !TeX root=../paper.tex

\section{Experiments}
\label{sec:experiments}

\subsection{Dataset}
Our experiments were run on 581 T1-weighted MRI scans from the IXI database, collected from three imaging centres with different scanners.\footnote{\url{http://brain-development.org/ixi-dataset/}} Each scan was bias field-corrected using SPM12\footnote{\url{http://www.fil.ion.ucl.ac.uk/spm/software/spm12/}} with default settings and rigidly registered to MNI space. SPM12 was further used to produce grey matter (GM), white matter (WM) and cerebrospinal fluid (CSF) tissue probability maps. We obtained brain masks by adding the three probability maps and thresholding at \num{0.5}. The statistics reported below were weighted by the voxel-wise tissue probabilities to account for partial-volume effects and segmentation ambiguities.

\subsection{Setup}

\begin{figure}[tb]
	\centering
    \includegraphics[width=\textwidth]{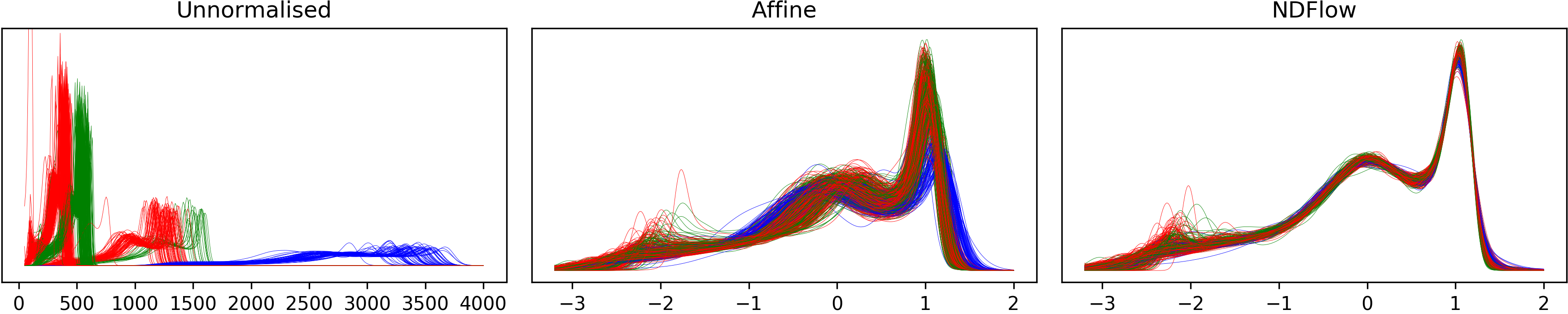}
    \caption{Population densities, colour-coded by imaging centre}
    \label{fig:pop_gmms}
\end{figure}

We firstly fitted the nonparametric mixture models to the full integer-value histograms of the raw images (inside the brain masks), as described in \cref{sec:method_model}. We set the DP's concentration parameter to 2 and used data-driven Normal--Gamma priors for the components. As an ad-hoc post-processing step, we pruned the leftover mixture components with weights smaller than $10^{-3}$. In the absence of one global reference distribution, we affinely aligned these DPGMMs and the corresponding data to zero mean and unit variance (cf.~\cref{fig:pop_gmms}, middle).

After this rough alignment, global and centre-wise average densities were computed. These were then considered as histograms to which we fitted global and centre-wise reference DPGMMs.
%\footnote{Although we could have analytically obtained the mean densities, this would have resulted in mixtures with several hundreds of components in this case.}

For normalisation, we consider two scenarios. The first is to normalise each centre's reference distribution to the global target, then to apply this same transformation to all subjects in that centre. In the other approach, each subject's image is individually normalised to the global target density. These scenarios reflect different practical applications where the centre-wise normalisation aims to preserve intra-centre variation, which might be desired. On the other hand, the individual normalisation aims to make all scans as similar as possible.

We compare our technique to Ny\'ul et al.'s prevalent quantile-based, piecewise linear histogram matching method \cite{Nyul2000}, considered state-of-the-art for intensity normalisation and referred here as \Nyul. We acquired the default 11 landmarks (histogram deciles and upper/lower percentiles) from the affine-aligned data for all subjects, then normalised each subject to this set of average landmarks.

\subsection{Results}

\paragraph{Histogram Fitness.}
\begin{figure}[tb]
	\centering
	\begin{subfigure}{0.32\textwidth}
		\includegraphics[width=\linewidth]{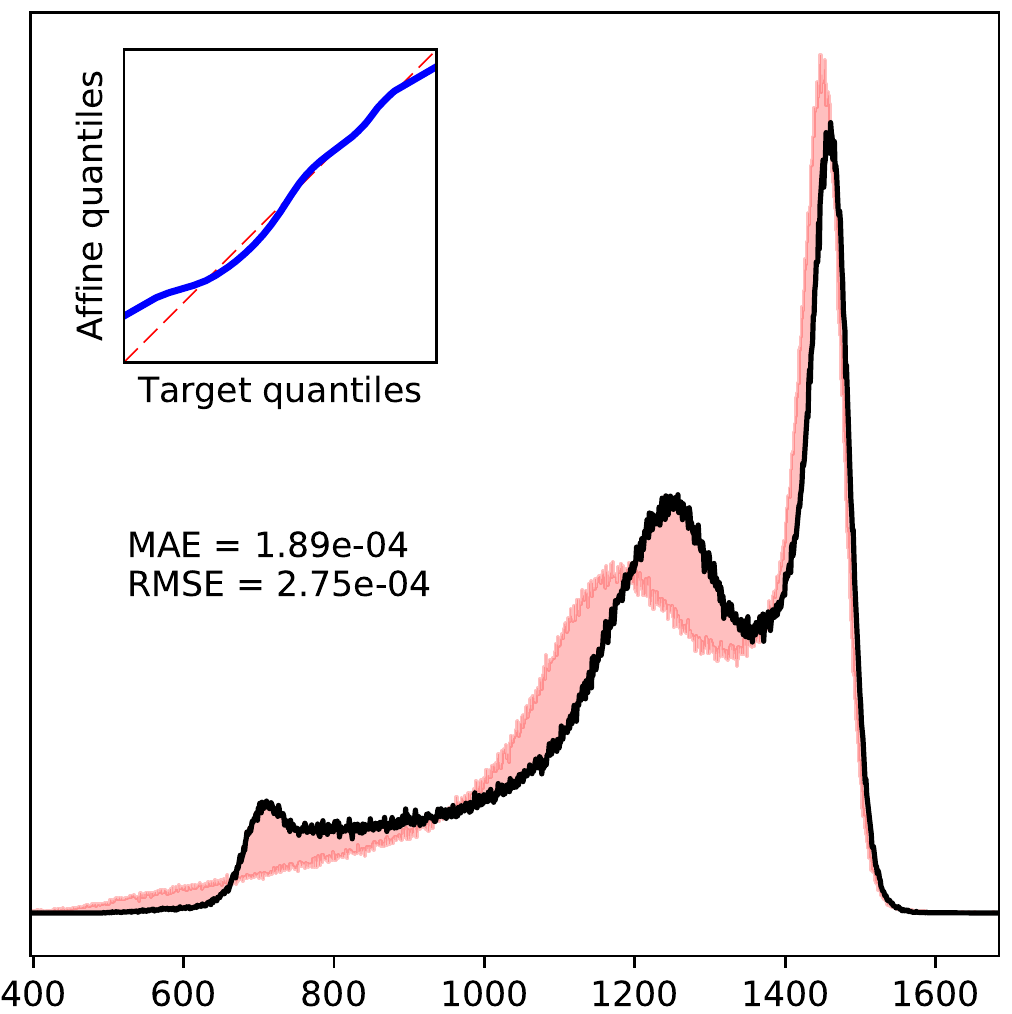}
		\captionof{figure}{Affine}
	\end{subfigure}
    \hfill
	\begin{subfigure}{0.32\textwidth}
		\includegraphics[width=\linewidth]{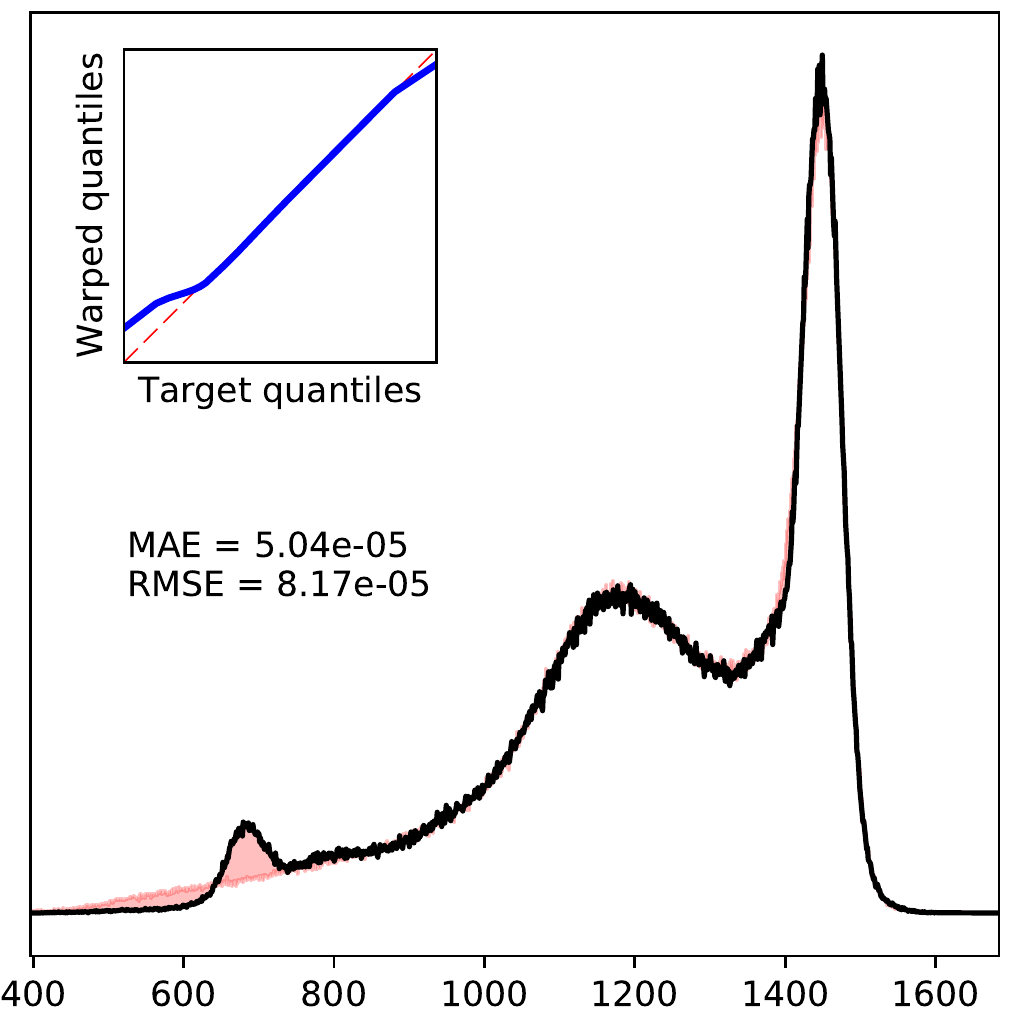}
		\captionof{figure}{\MethodName}
	\end{subfigure}
    \hfill
	\begin{subfigure}{0.32\textwidth}
		\includegraphics[width=\linewidth]{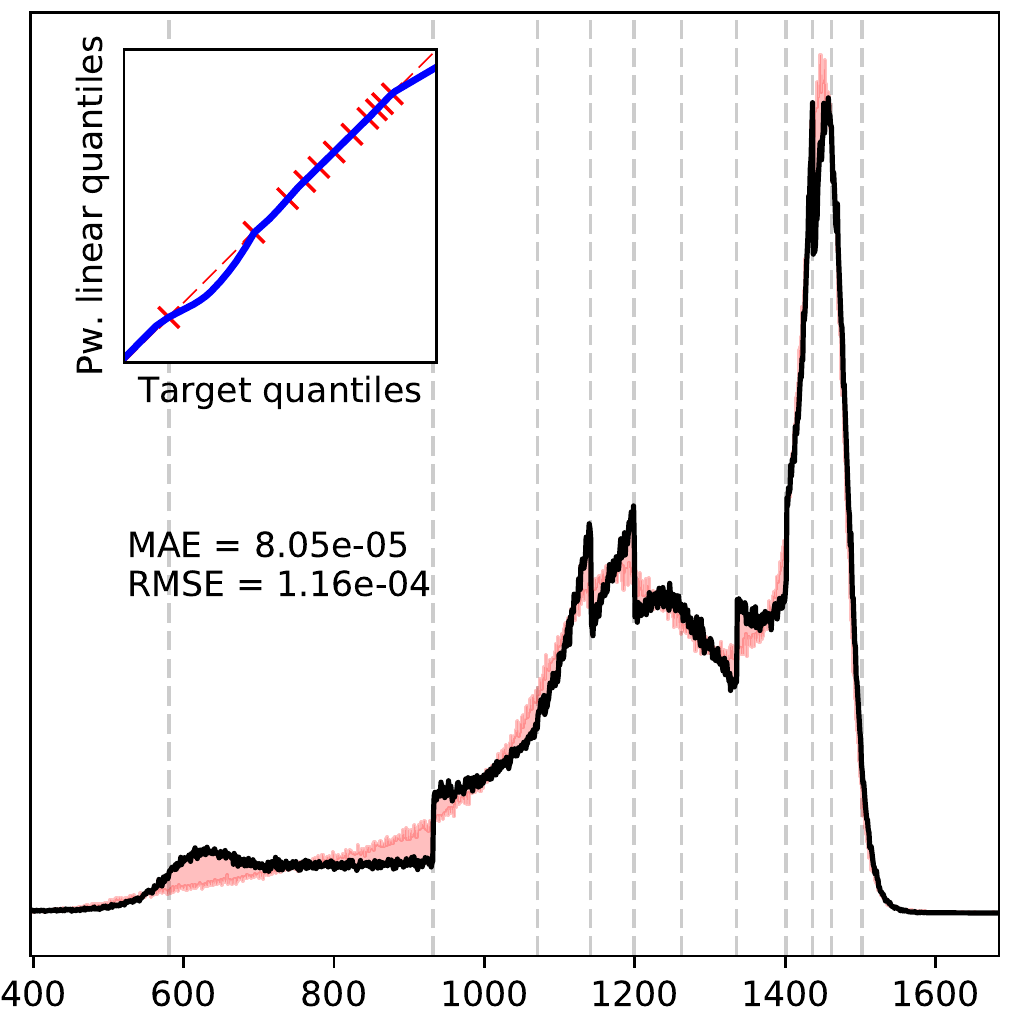}
		\captionof{figure}{\Nyul}\label{fig:hist_qq_nyul}
	\end{subfigure}
    \caption{Histograms and Q--Q plots of each of the methods against the target histogram. The shading shows the discrepancy between the transformed (black) and target histogram (light red). In the rightmost plot, the landmarks are indicated by vertical lines in the histogram and ticks in the Q--Q plot.}
    \label{fig:hist_qq}
\end{figure}

\Cref{fig:hist_qq} illustrates the results of normalisation between the pair of images in \cref{fig:example_imgs_hist}, which have a notable dissimilarity in the CSF region of the histograms. We observe that both our \MethodName- and \Nyul-transformed histograms present substantially lower mean absolute and root mean squared errors (MAE and RMSE) than the affine-aligned one, and our method performed best by a small margin. This is confirmed in a number of trials with other images.

A noteworthy artefact of \Nyul\ are abrupt jumps produced at the landmark values (e.g.\ \cref{fig:hist_qq_nyul}), which appear because interval are uniformly compressed or dilated by different factors, and may be detrimental to downstream histogram-based tasks (e.g.\ mutual information registration). \MethodName\ causes no such discontinuities due to the smoothness of the mass-conserving flows.

\paragraph{Tissue Statistics.}

\begin{table}[tb]
\centering
\sisetup{
	table-number-alignment=center,
	detect-weight, detect-family, detect-display-math,
	detect-inline-weight=math,
	separate-uncertainty=true,
	table-figures-uncertainty=4,
	table-figures-integer=1,
	table-figures-decimal=3,
    mode=text
}
\renewrobustcmd{\bfseries}{\fontseries{b}\selectfont}
\renewrobustcmd{\boldmath}{\fontseries{b}\selectfont}
% \newrobustcmd{\B}{\bfseries}
% \newcommand{\Nyul}{Pw. linear}
% \newcommand{\MethodCentre}{Centre-wise}
% \newcommand{\MethodIndiv}{Individual}
\newcommand{\MethodCentre}{\MethodName: Centre}
\newcommand{\MethodIndiv}{\MethodName: Indiv.}

\begin{threeparttable}
\setlength{\tabcolsep}{0pt}
\caption{Tissue statistics after normalisation (mean $\pm$ std.\ dev., $N = 581$)}
\label{tab:tissue_stats}

\newcommand{\PM}{~$\pm$~}
\newrobustcmd{\B}{\textbf}
\newcommand{\cpad}{\hspace{6pt}}
\newcommand{\csep}{\cpad\cpad}
\newcolumntype{L}{l<{\cpad}}
\newcolumntype{R}{>{\cpad}r}
\begin{tabular}{@{\cpad}l@{\csep}l@{\cpad}RLRLRL}
  \toprule
  		& Method
        & \multicolumn{2}{c}{\hphantom{$-$}1\textsuperscript{st} Quartile}
        & \multicolumn{2}{c}{\hphantom{$-$}Median}
        & \multicolumn{2}{c}{\hphantom{$-$}3\textsuperscript{rd} Quartile}	\\
  \midrule
  WM	& Affine		&  0.900 & \PM 0.040	&  1.024 & \PM 0.045	&  1.126 & \PM 0.055	\\[4pt]
		& \MethodCentre	&  0.898 & \PM 0.040	&  1.020 & \PM \B{0.040}&  1.121 & \PM \B{0.043}\\
		& \MethodIndiv	&  0.890 & \PM \B{0.029}&  1.014 & \PM \B{0.018}&  1.120 & \PM \B{0.016}\\
		& \Nyul			&  0.897 & \PM \B{0.029}&  1.023 & \PM \B{0.015}&  1.126 & \PM \B{0.008}\\
  \midrule
  GM	& Affine		& $-$0.296 & \PM 0.142	  &  0.025 & \PM 0.117	  &  0.344 & \PM 0.080	  \\[4pt]
		& \MethodCentre	& $-$0.297 & \PM \B{0.139}&  0.025 & \PM \B{0.114}&  0.344 & \PM \B{0.076}\\
		& \MethodIndiv	& $-$0.312 & \PM \B{0.094}&  0.027 & \PM \B{0.065}&  0.351 & \PM \B{0.058}\\
		& \Nyul			& $-$0.309 & \PM \B{0.106}&  0.027 & \PM \B{0.070}&  0.350 & \PM \B{0.064}\\
  \midrule
  CSF	& Affine		& $-$2.036 & \PM 0.145	  & $-$1.486 & \PM 0.140	& $-$1.024 & \PM 0.156	\\[4pt]
		& \MethodCentre	& $-$2.035 & \PM \B{0.143}& $-$1.480 & \PM 0.142	& $-$1.018 & \PM 0.160	\\
		& \MethodIndiv	& $-$2.031 & \PM \B{0.136}& $-$1.484 & \PM 0.170	& $-$1.028 & \PM 0.191	\\
		& \Nyul			& $-$2.025 & \PM \B{0.111}& $-$1.474 & \PM 0.178	& $-$1.029 & \PM 0.207	\\
\bottomrule
\end{tabular}
\begin{tablenotes}
	\item Bold: $p < .01$, one-tailed Brown--Forsythe test for lower variance than `Affine'
\end{tablenotes}
\end{threeparttable}
\end{table}

In \cref{tab:tissue_stats} we report the WM, GM and CSF intensity statistics for different normalisations. Firstly, we see that the centre-wise normalisation had a small but significant effect on the overall distribution statistics. More importantly, the variances of the statistics after individual \MethodName\ and \Nyul\ transformations were typically similar, and both were almost always substantially smaller than the variance after only affine alignment, with the exception of CSF.

It is known that the amount of intra-cranial fluid can vary substantially due to factors such as age and some neurodegenerative conditions, and this reflects on the distributions of intensities in brain MRI scans, which is evident in \cref{fig:pop_gmms}. As a result, normalising all subjects to a `mean' distribution fails to identify a consistent reference range for CSF intensities.

A fundamental limitation of any histogram matching scheme is that it is unclear how to proceed when the distributions are \emph{genuinely} different. Intensity distributions can be strongly affected by anatomical differences; for example, we can observe large variations in the amounts of fluid and fat in brain or whole-body scans, which may heavily skew the overall distributions (moderate example in \cref{fig:hist_qq}). The underlying assumption of these methods (including ours) is that the distributions are similar enough up to an affine rescaling and a mild nonlinear deformation of the values, thus handling histograms of truly different shapes remains an open challenge. For images with different fields of view, it may be beneficial to perform image registration before applying intensity normalisation.

\paragraph{Centre Classification.}
To evaluate the effectiveness of intensity normalisation for data harmonisation, we conducted a centre discrimination experiment with random forest classifiers trained on the full images. We report the pooled test results from two-fold cross validation (detailed results in \cref{app:centre_clf_results}).

Relative to affine normalisation, centre-wise and individual \MethodName\ and \Nyul\ showed a slight drop in overall classification accuracy (94.1\% vs.\ 92.7\%, 93.6\%, 92.9\%, resp.). On the other hand, the uncertainty, as measured by the entropy of the predictions, was significantly higher (paired $t$-test, all $p<.01$). Nonlinear intensity normalisation therefore seems to successfully remove some of the biasing factors which are discriminative of the origin of the images.

% !TeX root=../paper.tex

\section{Conclusion}
\label{sec:conclusion}

In this paper, we have introduced a novel method for MRI intensity normalisation, called \emph{nonparametric density flows} (\MethodName). It is based on fitting and matching Dirichlet process Gaussian mixture densities, by minimising their $L^2$ divergence, and on mass-conserving flows, which ensure that the empirical intensity distribution agrees with the matched density model.

We demonstrated that our normalisation approach makes tissue intensity statistics significantly more consistent across subjects than a simple affine alignment, and compares favourably to the state-of-the-art method of Ny\'ul et al.\ \cite{Nyul2000}. We have additionally verified that \MethodName\ is able to accurately match histograms without introducing spurious artefacts produced by the competing method. Finally, we argued that both normalisation techniques can reduce some discriminative scanner biases, in a step toward effective data harmonisation.

% By employing nonparametric mixture models, we are able to represent arbitrary histogram shapes with any number of modes. In some cases, it might make sense to actively ignore lesion-related modes (e.g.\ multiple sclerosis hyperintensities), and focus on matching only the bulk of the distributions. If these anomalous modes can be identified (e.g., via outlier detection), our formulation gives the flexibility to simply freeze the corresponding components' parameters during the optimisation. Evaluating the effectiveness of this approach and its robustness against lesion load is a compelling direction for further research.

By employing nonparametric mixture models, we are able to represent arbitrary histogram shapes with any number of modes. In addition, our formulation has the flexibility to match only part of the distributions, by freezing the parameters of some mixture components. This may be useful for ignoring lesion-related modes (e.g.\ multiple sclerosis hyperintensities), if the corresponding components can be identified (e.g., via anomaly detection). Evaluating this approach and its robustness against lesion load is a compelling direction for further research.

\subsubsection*{Acknowledgements.}
This project was supported by CAPES, Brazil (BEX 1500/2015-05), and by the European Research Council under the EU's Horizon 2020 programme (grant agreement No 757173, project MIRA, ERC-2017-STG).

\bibliographystyle{splncs03}
\bibliography{library}

\appendix
\newcommand{\pushfwd}[2]{{#1}_\# #2}
\renewcommand{\div}{\nabla\!\cdot}
\newcommand{\grad}[1][]{\nabla_{\!#1}}
\newcommand{\dx}[1]{\frac{\partial #1}{\partial x}}
\newcommand{\dt}[1]{\frac{\partial #1}{\partial t}}

\appendix

\section{Divergence Gradients}
\label{app:div_grads}

It can be shown that the derivative of the $L^2$ divergence between densities $q$ and $p$ with respect to some parameter $\theta$ of $q$ is given by
\begin{equation}\label{eq:dcs_deriv}
	\partial_\theta D_{L^2}[q, p] = \innerprod{q, \partial_\theta q} - \innerprod{p, \partial_\theta q} \,.
\end{equation}

\sloppypar{
Let $q = \sum_k \pi_k q_k$ and $p = \sum_m \tau_m p_m$ denote two Gaussian mixtures, with components $q_k(x) = \npdf{x}{\mu_k, \lambda_k\inv}$ and $p_m(x) = \npdf{x}{\nu_m, \omega_m\inv}$. Given that the derivatives of $q$ w.r.t.\ its component parameters are $
	{\frac{\partial q(x)}{\partial \mu_k} = \pi_k q_k(x) \cdot \lambda_k (x - \mu_k)}
$ and $
	{\frac{\partial q(x)}{\partial \lambda_k} = \pi_k q_k(x) \cdot \frac12 [\lambda_k\inv - (x - \mu_k)^2]}
$, the gradients of the divergence can be written as}
\begin{equation}\label{eq:grad_mean}
	\frac{\partial D_{L^2}[q, p]}{\partial \mu_k} =
    	\sum_l w_{lk} \frac{\mu_l - \mu_k}{\lambda_l\inv + \lambda_k\inv}
    	-\sum_m v_{mk} \frac{\nu_m - \mu_k}{\omega_m\inv + \lambda_k\inv} \,,
\end{equation}
\begin{multline}\label{eq:grad_prec}
	\frac{\partial D_{L^2}[q, p]}{\partial \lambda_k} =
    	\sum_l \frac{w_{lk}}{2} \mathopen{}\left[ \lambda_k\inv - (\lambda_l+\lambda_k)\inv
        	- \left( \lambda_l \frac{\mu_l - \mu_k}{\lambda_l + \lambda_k} \right)^2 \right] \\
    	-\sum_m \frac{v_{mk}}{2} \mathopen{}\left[ \lambda_k\inv - (\omega_m+\lambda_k)\inv
        	- \left( \omega_m \frac{\nu_m - \mu_k}{\omega_m + \lambda_k} \right)^2 \right] \,,
\end{multline}
where $w_{lk} = \pi_l \pi_k \innerprod{q_l, q_k}$ and $v_{mk} = \tau_m \pi_k \innerprod{p_m, q_k}$. To make sure the precisions $\{\lambda_k\}_k$ remain non-negative throughout the optimisation, we can simply reparametrise them as $\lambda_k = \ell_k^2$, with $\ell_k\in \bbbr$.

\section{Mass Conservation Proof}
\label{app:mass_cons_proof}

After performing the divergence minimisation scheme described above, we have access to the sequences $(\theta_t)_{t=0}^T$ and $(\dot{\theta}_t)_{t=0}^T$ (let us assume $t$ continuous for now), where $\dot{\theta}_t = -\partial_\theta D_{L^2}[q_t, p]$. This allows us to evaluate $q_t = q(\cdot; \theta_t)$ and $\partial_t q_t = \innerprod{\grad[\theta] q_t, \dot{\theta}_t}$. We seek a map $\bxi_t$ under which samples from $q_0$ will conform with $q_t$, i.e.
\[
	\*x \sim q_0 \implies \bxi_t(\*x) \sim q_t \,,
\]
aiming at making the final $\bxi_T(\*x)$ approximately agree with the target density $p$.

One key concept in the following developments is that of conservation of mass, a cornerstone of fluid dynamics. Taking a probability density for the typical mechanical density, the conservation of (probability) mass principle states that the probability of a point being in a fixed region of space changes by the net probability influx through its boundary. Alternatively, stated from the Lagrangian perspective, the probability of a moving region remains constant as its boundary is transported by a velocity field. Under smoothness assumptions on densities and velocities, both pictures are equivalent to the differential conservation law.

\begin{definition}
	A velocity field $(\*u_t)_t$ is said to \emph{conserve mass} for an evolving family of densities $(q_t)_t$ iff it satisfies the \emph{continuity equation}:
	\begin{equation}\label{eq:continuity}
		\forall t \,, \quad \partial_t q_t + \div (q_t \*u_t) = 0 \,.
	\end{equation}
\end{definition}

Let $\bxi_t(\*x)$ denote the trajectory of point $\*x$, as it is transported by the flow $(\*u_\tau)_\tau$ from time $0$ until $t$, with $\bxi_0(\*x) = \*x$. It can be formulated as the following ordinary differential equation (ODE):
\begin{equation}\label{eq:ode}
	\partial_t \bxi_t(\*x) = \*u_t(\bxi_t(\*x)) \,, \quad \bxi_0(\*x) = \*x \,.
\end{equation}

If a (locally Lipschitz-continuous) velocity field $\*u_t$ satisfies the continuity equation for an evolving density $q_t$, then the induced flow $\bxi_t$ is uniquely defined and $q_t = \pushfwd{(\bxi_t)}{q_0}$, i.e.\ the pushforward density through $\bxi_t$ coincides with the target \cite[p.\ 15]{Villani2009}.\footnote{Here we have overloaded the notion (and notation) of pushforward measure to the corresponding density function, denoting $\pushfwd{f}{p} = \abs{\Diff f\inv} \, p \circ f\inv$ for some PDF $p$ and diffeomorphism $f$.} Therefore, if the evolution of $q_t$ is known, we only have to determine a suitable $\*u_t$.

Recall that the evolving density satisfies the \emph{Jacobian equation}:
\begin{equation}
	\forall t \,, \quad q_0 = \det{\Diff \bxi_t} \, q_t \circ \bxi_t \,,
\end{equation}
where the Jacobian determinant, $\det{\Diff \bxi_t}$, quantifies the local compression (${>}\,1$) and expansion (${<}\,1$) of the density. It is often useful to compute it explicitly, which we can do based on \cref{eq:ode}:
\begin{equation}
	\det{\Diff \bxi_t} = \exp \int_0^t \div \*u_\tau \circ \bxi_\tau \diff \tau \,.
\end{equation}

The results shown here pertain to \emph{multivariate} densities and flows, and can naturally be specialised to the one-dimensional case discussed in the main paper.

\begin{lemma}\label{thm:velocity}
	The velocity field $(\*u_t)_t$ defined as
    \begin{equation}\label{eq:reparam_velocity}
		\*u_t = \dot{\bpsi}_{\theta_t} \circ \bpsi\inv_{\theta_t} = \innerprod{\grad[\theta] \bpsi_{\theta_t}, \dot{\theta}_t} \circ \bpsi\inv_{\theta_t} \,.
	\end{equation}
    conserves mass for a density $(q_{\theta_t})_t$ built via reparametrisation $(\bpsi_{\theta_t})_t$ of a fixed density $\tilde{q}$.
\end{lemma}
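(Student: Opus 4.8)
The plan is to verify the continuity equation \eqref{eq:continuity} for $(\*u_t)_t$ directly, by differentiating in time the Jacobian equation that links the fixed density $\tilde q$ to the evolving family $q_{\theta_t}$ through the reparametrisation, and then matching the resulting terms against $\partial_t q_{\theta_t}$, $\grad q_{\theta_t}\cdot\*u_t$ and $(\div\*u_t)\,q_{\theta_t}$. Throughout I write $q_t := q_{\theta_t}$ and assume $\bpsi_\theta$ is a smooth diffeomorphism depending smoothly on $\theta$ and that $t\mapsto\theta_t$ is $C^1$; these also furnish the local Lipschitz continuity of $\*u_t$ needed for the induced flow \eqref{eq:ode} to be well defined.

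Since $q_{\theta_t} = \pushfwd{(\bpsi_{\theta_t})}{\tilde q}$, the Jacobian equation reads $\tilde q(\*y) = \det{\Diff\bpsi_{\theta_t}(\*y)}\, q_{\theta_t}(\bpsi_{\theta_t}(\*y))$ for every base point $\*y$ (taking $\bpsi_{\theta_t}$ orientation-preserving, or absolute values throughout). Differentiating in $t$ with $\*y$ held fixed, the left-hand side vanishes because $\tilde q$ is constant; on the right-hand side, the product and chain rules together with $\dot{\bpsi}_{\theta_t} = \innerprod{\grad[\theta]\bpsi_{\theta_t},\dot{\theta}_t}$ and $\partial_t q_{\theta_t} = \innerprod{\grad[\theta] q_{\theta_t},\dot{\theta}_t}$ produce a sum of three terms, involving $\partial_t\det{\Diff\bpsi_{\theta_t}}$, $(\partial_t q_{\theta_t})\circ\bpsi_{\theta_t}$ and $\grad q_{\theta_t}(\bpsi_{\theta_t})\cdot\dot{\bpsi}_{\theta_t}$. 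Substituting $\*x = \bpsi_{\theta_t}(\*y)$ rewrites each term at a fixed point $\*x$ of the deformed space, using $\dot{\bpsi}_{\theta_t}(\bpsi_{\theta_t}\inv(\*x)) = \*u_t(\*x)$; dividing by the positive factor $\det{\Diff\bpsi_{\theta_t}(\bpsi_{\theta_t}\inv(\*x))}$ then yields $\partial_t q_t(\*x) + \grad q_t(\*x)\cdot\*u_t(\*x) + \big[\partial_t\log\det{\Diff\bpsi_{\theta_t}}\big](\bpsi_{\theta_t}\inv(\*x))\, q_t(\*x) = 0$.

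The main step is therefore to establish the logarithmic Jacobian identity $\partial_t\log\det{\Diff\bpsi_{\theta_t}(\*y)} = (\div\*u_t)(\bpsi_{\theta_t}(\*y))$, so that the bracketed coefficient above equals $(\div\*u_t)(\*x)$. For this I would apply Jacobi's formula, $\partial_t\det{\Diff\bpsi_{\theta_t}} = \det{\Diff\bpsi_{\theta_t}}\,\trace{(\Diff\bpsi_{\theta_t})\inv\partial_t\Diff\bpsi_{\theta_t}}$, interchange the spatial and temporal derivatives to get $\partial_t\Diff\bpsi_{\theta_t} = \Diff\dot{\bpsi}_{\theta_t}$, and then differentiate $\dot{\bpsi}_{\theta_t} = \*u_t\circ\bpsi_{\theta_t}$ spatially to obtain $\Diff\dot{\bpsi}_{\theta_t} = (\Diff\*u_t\circ\bpsi_{\theta_t})\,\Diff\bpsi_{\theta_t}$; cyclic invariance of the trace collapses the product to $\trace{\Diff\*u_t\circ\bpsi_{\theta_t}} = (\div\*u_t)\circ\bpsi_{\theta_t}$. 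Feeding this back into the previous display gives $\partial_t q_t + \grad q_t\cdot\*u_t + (\div\*u_t)\,q_t = \partial_t q_t + \div(q_t\*u_t) = 0$, which is \eqref{eq:continuity}.

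I expect the care to lie in the interchange of derivatives and the Jacobi-formula manipulation; the rest is routine bookkeeping. As an alternative (slightly slicker) route, one can observe that $\bxi_t = \bpsi_{\theta_t}\circ\bpsi_{\theta_0}\inv$ solves \eqref{eq:ode} for this $\*u_t$ and that, by functoriality of the pushforward, $\pushfwd{(\bxi_t)}{q_0} = \pushfwd{(\bpsi_{\theta_t})}{\big(\pushfwd{(\bpsi_{\theta_0}\inv)}{q_0}\big)} = \pushfwd{(\bpsi_{\theta_t})}{\tilde q} = q_t$, whence mass conservation follows from the characterisation of the continuity equation cited from \cite{Villani2009}; I would nonetheless present the direct PDE verification as the primary argument, since that cited statement is phrased in only one direction.
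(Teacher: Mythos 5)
Your primary argument is essentially the paper's own proof: both differentiate the Jacobian (pushforward) identity $\tilde q = \det{\Diff \bpsi_{\theta_t}}\, q_{\theta_t} \circ \bpsi_{\theta_t}$ in time, use $\partial_t \tilde q = 0$ together with Jacobi's formula to identify $\partial_t \log \det{\Diff \bpsi_{\theta_t}}$ with $(\div \*u_t) \circ \bpsi_{\theta_t}$, and transport the resulting identity back through $\bpsi\inv_{\theta_t}$ to obtain the continuity equation. Your secondary observation that $\bxi_t = \bpsi_{\theta_t} \circ \bpsi_{\theta_0}\inv$ solves the flow ODE and pushes $q_0$ forward to $q_t$ by functoriality is a valid shortcut not taken in the paper, and your caveat that the cited characterisation from \cite{Villani2009} is stated in only one direction is well placed.
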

\begin{proof}
With $\*u$ defined as above (we will use a simplified notation here for clarity), we have its divergence as
\[
	\div \*u = \div (\dot{\bpsi} \circ \bpsi\inv) = \trace{(\Diff \bpsi)\inv \Diff \dot{\bpsi}} \circ \bpsi\inv \,.
\]

Let $q = \pushfwd{\bpsi}{\tilde{q}} = \det{\Diff \bpsi\inv} \tilde{q} \circ \bpsi\inv$. Taking the total time derivative of $q$,
\[
	\partial_t (q \circ \bpsi) = \partial_t q \circ \bpsi + (\grad q \circ \bpsi) \cdot \partial_t \bpsi \,,
\]
we can write	
\begin{align*}
	\partial_t q \circ \bpsi
		&= \partial_t (\det{\Diff \bpsi}\inv \tilde{q}) - (\grad q \circ \bpsi) \cdot \partial_t \bpsi \\
		&= -(\det{\Diff \bpsi}\inv \tilde{q}) \trace{(\Diff \bpsi)\inv \Diff \dot{\bpsi}} - (\grad q \circ \bpsi) \cdot \dot{\bpsi} \\
		&= -(q \div \*u + \grad q \cdot \*u) \circ \bpsi \\
		&= -\div (q \*u) \circ \bpsi \,, \numberthis \label{eq:proof_conservation}
\end{align*}
where we have applied Jacobi's formula,
$
	\partial_t \det{\Diff \bpsi}\inv = -\det{\Diff \bpsi}\inv \trace{(\Diff \bpsi)\inv \Diff \dot{\bpsi}} \,,
$
and used the fact that $\partial_t \tilde{q} = 0$.

Since we have taken $\bpsi$ to be diffeomorphic (hence surjective), the result in \cref{eq:proof_conservation} must also hold over the entire image of $\bpsi$, i.e.
\[
	\partial_t q + \div (q \*u) = 0 \,. \tag*{\qed}
\]
\end{proof}

\begin{proposition}\label{thm:mass_cons_mixture}
Let $(q_k)$ denote density functions and $(\*u_k)$ velocity fields, and $(\pi_k)$ such that $\pi_k > 0$ and $\sum_k \pi_k = 1$. If the flow determined by each $\*u_k$ conserves mass for the evolution of the respective $q_k$, then the flow determined by $\*u = \sum_k \frac{\pi_k q_k}{q} \*u_k$ conserves mass for the evolution of $q = \sum_k \pi_k q_k$.
\end{proposition}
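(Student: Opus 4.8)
The plan is to exploit the linearity of the flux term in the continuity equation together with the fact that multiplying the proposed field by the total density clears its denominator. The pivotal observation is the pointwise identity $q_t\,\*u_t = \sum_k \pi_k\, q_{k,t}\,\*u_{k,t}$: by \cref{eq:mixture_velocity} (with $q$ in the role of the evolving mixture), the \emph{flux} of the mixture is exactly the $\pi_k$-weighted sum of the component fluxes. Once this is in hand the rest is essentially forced.

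Next I would take the divergence of both sides. Since the $\pi_k$ are constant in both $\*x$ and $t$, $\div(q_t\,\*u_t) = \sum_k \pi_k \div(q_{k,t}\,\*u_{k,t})$. By hypothesis each $\*u_k$ conserves mass for $q_k$, i.e.\ $\div(q_{k,t}\,\*u_{k,t}) = -\partial_t q_{k,t}$ by \cref{eq:continuity}. Substituting and pulling the time derivative back out of the finite sum gives $\div(q_t\,\*u_t) = -\sum_k \pi_k\,\partial_t q_{k,t} = -\partial_t\!\left(\sum_k \pi_k q_{k,t}\right) = -\partial_t q_t$, which is precisely the continuity equation for the pair $(q_t,\*u_t)$. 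This establishes the claim.

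I would close with a brief well-posedness remark: because $\pi_k > 0$ and each $q_k$ is a probability density, the posterior weights $\pi_k q_k / q$ are well-defined convex coefficients (bounded in $[0,1]$) wherever $q > 0$, and for Gaussian mixtures $q > 0$ everywhere, so $\*u$ inherits the local Lipschitz regularity of the component fields $\*u_k$. The induced flow $\bxi_t$ is then uniquely defined, and the pushforward characterisation recalled earlier applies, giving $q_t = \pushfwd{(\bxi_t)}{q_0}$ as desired.

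The main obstacle is not the algebra but the analytic bookkeeping: justifying the interchange of $\partial_t$ with $\sum_k$ and ensuring the posterior-weight field is regular enough for \cref{eq:continuity} to hold classically rather than merely distributionally. With finitely many smooth, strictly positive Gaussian components — the case after pruning, as in \cref{sec:method_model} — this is immediate; it is only in a more general setting (e.g.\ the countably many components the DP prior nominally permits, or components with compact support causing $q$ to vanish) that this step would require genuine care.
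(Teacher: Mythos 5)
Your proof is correct and is essentially the paper's own argument read in the opposite direction: both hinge on the flux identity $q\,\*u = \sum_k \pi_k q_k \*u_k$ followed by linearity of $\partial_t$ and $\div$ over the finite sum (the paper starts from the $\pi_k$-weighted sum of the component continuity equations and recognises the mixture flux, whereas you start from the flux identity and substitute the component equations). Your added remarks on positivity of $q$, Lipschitz regularity, and the interchange of $\partial_t$ with the sum are sound and go slightly beyond what the paper records, but do not change the approach.
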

\begin{proof}
Let us assume that, for some choice of parametric density family $q_k$, we have obtained a velocity field $\*u_k$ that satisfies the continuity equation for the evolution of $q_k$, for each $k$:
\begin{equation}
	\forall k \,, \quad \dt{q_k} + \div (q_k \*u_k) = 0 \,.
\end{equation}

Now, taking a convex combination of the above with weights $(\pi_k)_k$, we obtain
\begin{align*}
	0 &= \sum_k \pi_k \left[ \dt{q_k} + \div (q_k \*u_k) \right] \\
		&= \dt{} \left( \sum_k \pi_k q_k \right) + \div \left( \sum_k \pi_k q_k \*u_k \right) \\
		&= \dt{} \left( \sum_k \pi_k q_k \right) + \div \left[ \left( \sum_k \pi_k q_k \right) \left( \sum_k \frac{\pi_k q_k}{\sum_l \pi_l q_l} \*u_k \right) \right] \\
		&= \dt q + \div (q \*u) \,,
\end{align*}
where we have defined $q = \sum_k \pi_k q_k$ and $\*u = \sum_k \frac{\pi_k q_k}{q} \*u_k$. \qed
\end{proof}

\clearpage
\section{Centre Classification Results}
\label{app:centre_clf_results}

\begin{table}[!h]
\newcommand{\MethodCentre}{\MethodName: Centre}
\newcommand{\MethodIndiv}{\MethodName: Indiv.}
\centering
\caption{Centre classification accuracy. The central columns correspond to the three London imaging centres where IXI data was collected: Guy's Hospital, Hammersmith Hospital (HH) and Institute of Psychiatry (IOP). The `Overall' column shows the class-balanced average accuracy.}
\setlength{\tabcolsep}{10pt}
\begin{tabular}{lcccc}
	\toprule
					& Guy's		& HH		& IOP		& Overall	\\
% 	\midrule
    \cmidrule(lr){2-4} \cmidrule(lr){5-5}
    Unnormalised	& 0.9906	& 0.9892	& 1.0000	& 0.9913	\\[4pt]
    Affine			& 0.9687	& 0.9135	& 0.8904	& 0.9411	\\[4pt]
    \MethodCentre	& 0.9687	& 0.9081	& 0.7945	& 0.9272	\\
    \MethodIndiv	& 0.9749	& 0.9135	& 0.8219	& 0.9359	\\
    \Nyul			& 0.9655	& 0.9189	& 0.7945	& 0.9289	\\
    \bottomrule
\end{tabular}
\end{table}

\begin{figure}[!h]
\centering
\includegraphics[width=.95\textwidth]{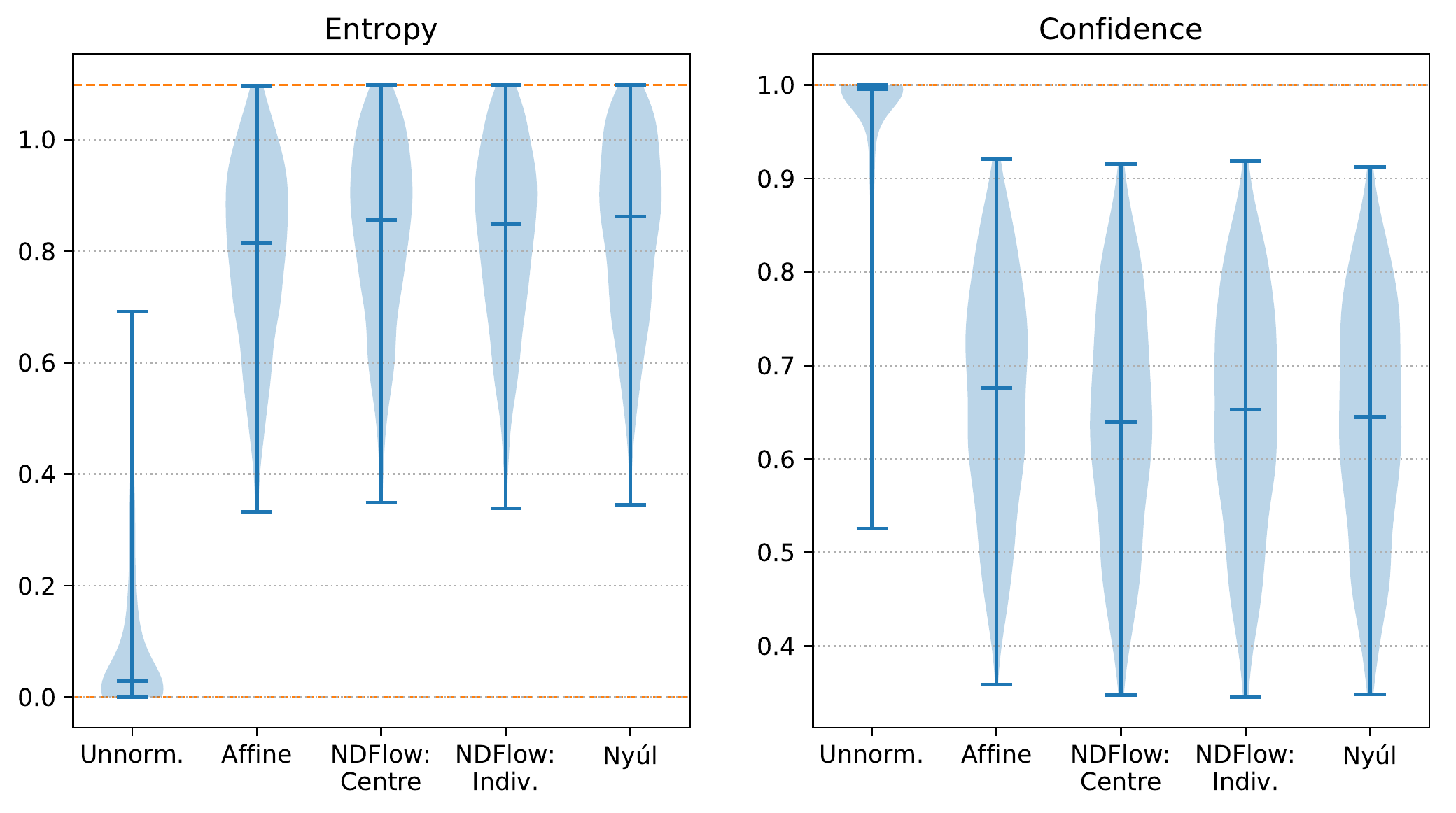}
\caption{Centre prediction statistics. Each violin plot indicates minimum, maximum and median. Dashed horizontal lines mark maximal possible values ($\log 3 \approx 1.10$ for entropy). Confidence here is the predicted probability of the chosen class.}
\end{figure}

\end{document}